\documentclass{article}

\usepackage{microtype}
\usepackage{graphicx}
\usepackage{subcaption}
\usepackage{booktabs} % for professional tables

\usepackage{hyperref}

\usepackage[preprint]{icml2026}

\usepackage{amsmath}
\usepackage{amssymb}
\usepackage{amsfonts}
\usepackage{mathtools}
\usepackage{amsthm}
\usepackage{algorithm}
\usepackage{algorithmic}
\usepackage{comment}
\usepackage{tikz}
\usetikzlibrary{shapes.geometric, arrows.meta, positioning, calc, fit, backgrounds}
\usepackage{pifont}% http://ctan.org/pkg/pifont
\newcommand{\cmark}{\ding{51}}%
\newcommand{\xmark}{\ding{55}}%
\usepackage{multirow}

\newcommand{\il}[1]{\textit{#1}}
\newcommand{\bd}[1]{\textbf{#1}}

\theoremstyle{plain}
\newtheorem{theorem}{Theorem}[section]

\newtheorem{lemma}[theorem]{Lemma}

\theoremstyle{definition}

\theoremstyle{remark}

\icmltitlerunning{SEMIR: Semantic Minor-Induced Representation Learning on Graphs for Visual Segmentation}

\begin{document}
\twocolumn[
  \icmltitle{SEMIR: Semantic Minor-Induced Representation Learning on Graphs for Visual Segmentation}
    \begin{center}
  \small Accepted at the 43rd International Conference on Machine Learning (ICML 2026).
  \end{center}
  \icmlsetsymbol{equal}{*}

  \begin{icmlauthorlist}
    \icmlauthor{Luke James Miller}{yyy}
    \icmlauthor{Yugyung Lee}{yyy}
  \end{icmlauthorlist}

  \icmlaffiliation{yyy}{Department of Computing, Analytics and Mathematics; University of Missouri-Kansas City, Kansas City, United States}

  \icmlcorrespondingauthor{Luke James Miller}{ljmbm5@umsystem.edu}
  % \icmlcorrespondingauthor{Yugyun}{first2.last2@www.uk}
  \icmlkeywords{Computer Vision, CV, Graph, Graph Minor, Graph Neural Network, GNN, Medical Images, Gigapixel, Segmentation, Tumor Prediction, superpixel}

  \vskip 0.3in
]

% this must go after the closing bracket ] following \twocolumn[ ...

% This command actually creates the footnote in the first column listing the
% affiliations and the copyright notice. The command takes one argument, which
% is text to display at the start of the footnote. The \icmlEqualContribution
% command is standard text for equal contribution. Remove it (just {}) if you
% do not need this facility.

% Use ONE of the following lines. DO NOT remove the command.
% If you have no special notice, KEEP empty braces:
\printAffiliationsAndNotice{}  % no special notice (required even if empty)
% Or, if applicable, use the standard equal contribution text:
% \printAffiliationsAndNotice{\icmlEqualContribution}

\begin{abstract}
Segmenting small and sparse structures in large-scale images is fundamentally constrained by voxel-level, lattice-bound computation and extreme class imbalance--dense, full-resolution inference scales poorly and forces most pipelines to rely on fixed regionization or downsampling, coupling computational cost to image resolution and attenuating boundary evidence precisely where minority structures are most informative. We introduce \textbf{SEMIR} (\emph{Semantic Minor-Induced Representation Learning}), a representation framework that decouples inference from the native grid by learning a task-adapted, topology-preserving latent graph representation with exact decoding. \textbf{SEMIR} transforms the underlying grid graph into a compact, boundary-aligned graph minor through parameterized edge contraction, node deletion, and edge deletion, while preserving an exact lifting map from minor predictions to lattice labels. Minor construction is formalized as a few-shot structure learning problem that replaces hand-tuned preprocessing with a \emph{boundary-alignment objective}: minor parameters are learned by maximizing agreement between predicted boundary elements and target-specific semantic edges under a \emph{boundary Dice criterion}, and the induced minor is annotated with scale- and rotation-robust geometric and intensity descriptors and supports efficient region-level inference via message passing on a graph neural network (GNN) with relational edge features. We benchmark \textbf{SEMIR} on three tumor segmentation datasets—\textbf{BraTS 2021}, \textbf{KiTS23}, and \textbf{LiTS}—where targets exhibit high structural variability and distributional uncertainty. \textbf{SEMIR} yields consistent improvements in \emph{minority-structure Dice} at practical runtime. More broadly, SEMIR establishes a framework for learning task-adapted, topology-preserving latent representations with exact decoding for high-resolution structured visual data.
\end{abstract}  

\section{Introduction}\label{sec:intro}

Semantic segmentation of medical images presents a fundamental tension between computational scalability and structural fidelity. Volumetric modalities such as CT and MRI routinely produce images exceeding $10^8$ voxels, yet clinically relevant structures—tumors, lesions, fine anatomical boundaries—often occupy fewer than 1\% of the volume \cite{ghankot2025evaluating}. Dense, voxel-wise inference scales poorly to such regimes: computational cost grows with image resolution rather than structural complexity, and extreme class imbalance attenuates the gradient signal from minority structures precisely where accurate delineation is most critical \cite{ga02025tomography}.

Existing approaches address scalability through bottom-up compression: patch-based processing, fixed-grid downsampling, or hand-crafted superpixel extraction. These methods couple the inference resolution to the native lattice or predetermined regionization, sacrificing boundary precision for tractability. The resulting representations are task-agnostic—they do not adapt to the semantic structure of the target domain—and lose boundary evidence before models can leverage it. Multi-class formulations that jointly segment all structures must balance competing objectives across classes with vastly different spatial extent—a trade-off that loss reweighting only partially addresses.

We propose an alternative, top-down formulation: rather than compressing the image into a fixed representation, we \emph{learn} a task-adapted inference space that preserves semantically relevant structure while dramatically reducing computational burden. Our approach, \textbf{SEMIR} (\emph{Semantic Minor-Induced Representation Learning}), constructs a compact graph minor $H \preceq G$ from the native voxel lattice $G$ through parameterized edge contraction, node deletion, and edge deletion. The minor $H$ defines a sparse set of supernodes aligned to image boundaries, with node count typically on the order of $\sqrt{|V(G)|}$—reducing a $256^3$ volume from ${\sim}10^7$ voxels to ${\sim}10^3$ supernodes while maintaining an exact bijective lifting map for voxel-level prediction. Unlike multi-class pipelines that segment all structures jointly, SEMIR constructs a target-specific inference space: for each structure of interest, we learn a boundary-aligned minor optimized to delineate that structure, reducing the problem to binary classification on the induced graph.

The minor construction is governed by a small parameter set $\Theta$, optimized via black-box few-shot optimization on boundary alignment objective. This few-shot optimization replaces manual threshold tuning with a principled, data-driven procedure: given a handful of labeled examples, the optimizer recovers parameters $\Theta_{\mathrm{opt}}$ such that the induced minor boundaries maximize agreement with ground-truth semantic edges, independent of the downstream label set. The resulting representation generalizes across segmentation tasks sharing similar boundary statistics.

Downstream inference operates entirely on the minor $H$: supernodes are annotated with scale- and rotation-invariant geometric and intensity descriptors, and a graph neural network propagates information across the reduced topology to produce supernode-level predictions. The lifting operator then maps these predictions back to the original lattice, yielding voxel-wise segmentation at a fraction of the computational cost of dense inference.

We evaluate SEMIR on three challenging tumor segmentation benchmarks—BraTS 2021, KiTS23, and LiTS—where targets exhibit high structural variability, extreme class imbalance, and ambiguous boundaries. Across all three datasets, SEMIR yields consistent improvements in minority-structure Dice while maintaining competitive performance on aggregate metrics, demonstrating that \emph{structure-adaptive representations} offer a principled path toward scalable, boundary-aware segmentation in high-resolution medical imaging.

% \subsection{Contributions}\label{sec:intro:contribution}
% A \textbf{representation framework} that decouples inference complexity from image resolution by constructing a \emph{learned graph minor} over the voxel lattice, reducing node count by orders of magnitude while preserving exact voxel-level prediction via a bijective lifting.

% A \textbf{class-agnostic boundary alignment objective} optimized via few-shot Bayesian optimization, enabling task-adapted minor construction without manual parameter tuning.

% \textbf{Scale- and rotation-invariant node and edge descriptors} that support efficient message passing on the induced minor topology.

% \textbf{Consistent improvements in minority-structure Dice scores} for clinically critical targets (tumors, lesions) that occupy small fractions of image volume—the regime where multi-class methods underperform due to class imbalance.

% (1) A learned graph minor framework decoupling inference complexity from image resolution with exact voxel-level lifting. (2) Few-shot, black-box optimization for boundary alignment. (3) Scale- and rotation-invariant node/edge descriptors for message passing. (4) Consistent improvements on minority-structure Dice for tumor segmentation.

%\subsection{Contributions}\label{sec:intro:contribution}

\noindent {\bf Contributions}
(1) A learned graph minor framework decoupling inference complexity from image resolution with \emph{exact} voxel-level lifting---no interpolation, no boundary artifacts, no approximation. (2) Few-shot, black-box optimization for boundary alignment, replacing manual parameter tuning with a data-driven procedure requiring only 5--20 labeled examples. (3) Scale- and rotation-invariant node/edge descriptors enabling effective message passing on anisotropic medical volumes. (4) Consistent improvements on minority-structure Dice for tumor segmentation---the regime where multi-class methods systematically underperform.

\noindent {\bf Conflict of Interest Disclosure.}
The authors declare no financial conflicts of interest related to this work.

\section{Related Work}\label{sec:related}

\subsection{Medical Image Segmentation}\label{sec:related:MdImgSg}

U-Net and its variants—incorporating attention, residual connections, and transformer encoders—dominate medical image segmentation \cite{jingtao2025unet, Luo2025skip, peng2025skip}. However, computational cost scales with image resolution rather than structural complexity, necessitating patch-based inference or downsampling that discards fine boundary information before the model observes it. The challenge is acute for minority structures: tumors and lesions occupy vanishing fractions of image volume, creating class imbalance that specialized losses only partially mitigate \cite{YEUNG2022focal, hosseini2025pixelwisemodulateddiceloss}. Boundary-aware and topological losses improve fidelity but remain tied to dense, lattice-bound inference \cite{ZHENG2025boundary}.

\subsection{Superpixel and Oversegmentation Methods}\label{sec:related:superpixel}
\begin{table}
\centering
\caption{Comparison of region reduction approaches. SEMIR provides topology-preserving, boundary-aligned reduction with task-adapted parameters, unlike token merging or fixed pooling methods. 
\textsuperscript{1}\cite{ying2018diffpool}
\textsuperscript{2}\cite{achanta2012}
\textsuperscript{3}\cite{felzenszwalb2004efficient}}
\label{tab:token-comparison}
\small
\begin{tabular}{l |c| c |c |c }
    \toprule
     & Diff- &  & Felzens- & \\
    & Pool\textsuperscript{1}&SLIC\textsuperscript{2} & walb\textsuperscript{3} & \textbf{SEMIR}\\
    \midrule
    Preserves   & $\sim$ & \cmark & \cmark & \cmark\\
    Topology &&&&\\
    \midrule
    Task  & \cmark & \xmark & \xmark & \cmark\\
    Aware &&&&\\
    \midrule
    Boundary   & \xmark & $\sim$ & $\sim$ & \cmark\\
    Aligned &&&&\\
    \midrule
    Exact   & \xmark & \cmark & \cmark & \cmark\\
    Lifting &&&&\\
    \midrule
    Decoding   & \xmark & \xmark & \xmark & \cmark\\
    Guarantee&&&&\\
    \bottomrule
\end{tabular}
\end{table}
% \begin{table}
% \centering
% \caption{Comparison of region reduction approaches. SEMIR provides topology-preserving, boundary-aligned reduction with task-adapted parameters, unlike token merging or fixed pooling methods. 
% \textsuperscript{1}\cite{bolya2023tome}
% \textsuperscript{2}\cite{marin2021token}
% \textsuperscript{3}\cite{ying2018diffpool}
% \textsuperscript{4}\cite{achanta2012}
% \textsuperscript{5}\cite{felzenszwalb2004efficient}}
% \label{tab:token-comparison}
% \small
% \begin{tabular}{l c c c c c}
%     \toprule
%     Method & Preserves & Task & Boundary & Exact & Decoding\\
%     & Topology & Aware & Aligned & Lifting & Guarantee\\
%     \midrule
%     ToMe\textsuperscript{1} & \xmark & \xmark & \xmark & \xmark & \xmark\\
%     Token- & \multirow{2}{*}{\xmark} & \multirow{2}{*}{\xmark} & \multirow{2}{*}{\xmark} & \multirow{2}{*}{\xmark} & \multirow{2}{*}{\xmark}\\
%     Pooling\textsuperscript{2} & & & & \\
%     DiffPool\textsuperscript{3}  & $\sim$ & \cmark & \xmark & \xmark & \xmark\\
%     SLIC\textsuperscript{4}  & \cmark & \xmark & $\sim$ & \cmark & \xmark\\
%     Felzens- & \multirow{2}{*}{\cmark} & \multirow{2}{*}{\xmark} & \multirow{2}{*}{$\sim$} & \multirow{2}{*}{\cmark} & \multirow{2}{*}{\xmark}\\
%     walb\textsuperscript{5} & & & & \\
%     \midrule
%     \textbf{SEMIR} & \cmark & \cmark & \cmark & \cmark & \cmark\\
%     \bottomrule
% \end{tabular}
% \end{table}

Superpixel algorithms—SLIC \cite{achanta2012}, Felzenszwalb-Huttenlocher \cite{felzenszwalb2004efficient}, watershed \cite{neubert2014watershed}—reduce complexity by grouping pixels into perceptually coherent regions. However, their regionization is task-agnostic: boundaries follow low-level gradients rather than semantic structure, and parameters require manual tuning. Learned superpixel methods improve boundary adherence but remain constrained by fixed grids or differentiable relaxations \cite{STUTZ2018superpixels}. In medical imaging, superpixels serve primarily as preprocessing for classical classifiers \cite{liu2020superpixel}. A fundamental limitation is the absence of a principled framework relating induced regions to the original image; mapping predictions back requires heuristics that introduce boundary artifacts. Differentiable pooling methods (MinCutPool, DMoN) learn soft cluster assignments but lack lifting guarantees; learned superpixel networks (SSN, SEAL) improve boundary adherence but remain grid-bound.

\subsection{Graph Neural Networks in Medical Imaging}\label{sec:related:graphImg}

GNNs have been applied to anatomical structure modeling, histopathology cell graphs \cite{guan2025instance}, and region adjacency graphs over superpixels \cite{BRUSSEE2025gnn, mienye2025gnn}. However, graphs are typically constructed from fixed regionizations—anatomical templates, regular grids, or classical superpixel outputs—determined independently of the downstream task \cite{ding2022spatially}.

\subsection{Graph Minors}\label{sec:related:minors}

Graph minors provide a rigorous framework for relating a derived graph $H$ to a parent $G$: $H \preceq G$ if $H$ can be obtained through edge contractions, edge deletions, and node deletions. The Robertson-Seymour theorem establishes that minor-closed properties admit finite forbidden minor characterizations, with polynomial-time testing for fixed $H$ \cite{robertson2004graph, lovasz2006graph}. Minors have seen limited vision application. The key insight we exploit is that edge contraction defines a surjection from parent to minor nodes, inducing an exact partition of the original vertex set \cite{demaine2005algorithmic}. This partition provides a bijective lifting map for transferring predictions to the original lattice without approximation.

Our approach differs from superpixel methods in three respects: (1) the minor is constructed through parameterized graph operations with formal guarantees; (2) parameters are optimized for boundary alignment via black-box optimization; and (3) the lifting map is exact by construction.

\section{Method}\label{sec:method}
\begin{figure*}
    \centering
    \includegraphics[width=.9\linewidth, trim={0cm 30cm 82cm 0cm}, clip]{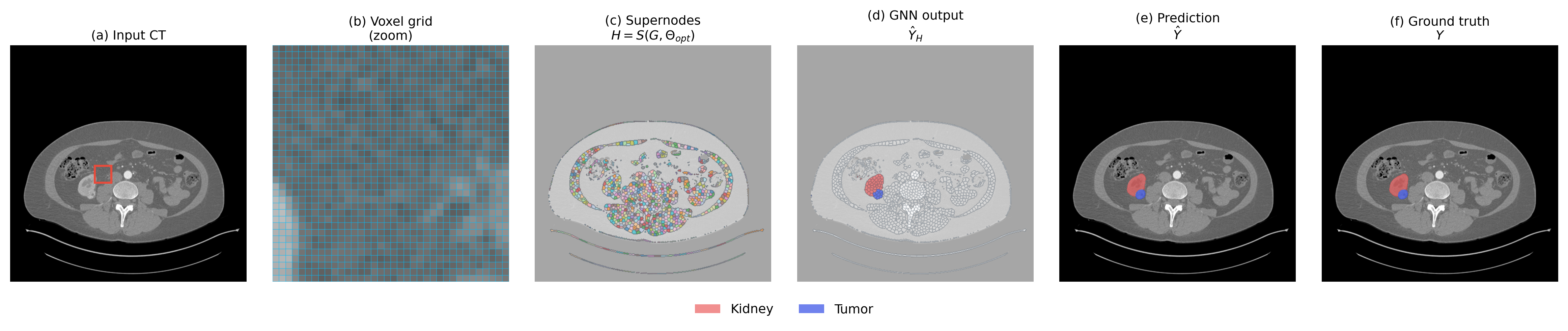}
    \includegraphics[width=.9\linewidth, trim={0cm 10cm 82cm 12cm}, clip]{pipeline_Kits.png}
    \includegraphics[width=.9\linewidth, trim={82cm 30cm 0cm 0cm}, clip]{pipeline_Kits.png}
    \includegraphics[width=.9\linewidth, trim={82cm 10cm 0cm 12cm}, clip]{pipeline_Kits.png}
    \caption{SEMIR pipeline visualization on a KiTS23 case. (a) Input contrast-enhanced CT. (b) Zoomed region showing the native voxel grid; each voxel corresponds to a node in $G$. (c) Boundary-aligned graph minor $H \preceq G$ constructed via parameterized edge contraction, node deletion, and edge deletion with few-shot optimized $\Theta_{\mathrm{opt}}$; supernodes colored by ID. (d) Node-level predictions $\hat{Y}_H$ from GNN inference on $H$. (e) Final voxel segmentation $\hat{Y}$ obtained via bijective lifting. (f) Ground truth $Y$. Kidney shown in red, tumor in blue.}
    \label{fig:pipeline_vis}
\end{figure*}

\subsection{Overview}\label{sec:method:overview}
SEMIR constructs a compact, boundary-aligned graph minor from the native voxel lattice and performs segmentation via node classification on this reduced representation (Figure~\ref{fig:pipeline_vis}). The input volume is encoded as a binary tensor $T$ representing the N-connected grid graph $G$. A graph minor is then derived through three parameterized operations: \emph{edge contraction} which merges similar voxels into supernodes, \emph{node deletion} which removes supernodes violating size constraints, and \emph{edge deletion} which severs connections across intensity gradients. The parameter set $\Theta$ defines a family of feasible partitions; few-shot optimization selects from this hypothesis class by maximizing boundary alignment with target-specific semantic edges. Supernodes are annotated with geometric and intensity descriptors; edge features encode scale- and rotation-invariant relative differences. A GNN performs node classification on the graph minor, and predictions are mapped to the voxel grid via the bijection encoded in $T$ and a lifting function. The minor typically reduces inference from ${\sim}10^7$ voxels to ${\sim}10^3$ supernodes while guaranteeing exact voxel-level recovery.

\subsection{Notation}\label{sec:method:notation}

We segment volumetric images $I \in \mathbb{R}^{H \times W \times D \times C}$ into $K$ classes, producing predictions $\hat{Y} \in \{0,\dots,K-1\}^{H \times W \times D}$. A dataset $\mathcal{D} = \{(I^{(n)}, Y^{(n)})\}_{n=1}^N$ pairs images with ground-truth label maps. The N-connected voxel grid defines graph $G = (V(G), E(G))$ with connectivity $N$. The graph minor $H = (V(H), E(H), X(H), F(H))$ comprises supernodes with feature matrix $X(H) \in \mathbb{R}^{|V(H)| \times d_x}$ and edge features $F(H) \in \mathbb{R}^{|E(H)| \times d_f}$. Few-shot optimization yields $\Theta_{\mathrm{opt}} = R(\mathcal{D}_{\mathrm{few}}, \Theta)$ by aligning predicted boundaries $\hat{Y}_B = S_B(T, \Theta)$ with ground-truth boundaries $Y_B$ relevant to target structures. Full notation table in Appendix~\ref{app:notation}.

% \subsection{Expanded Tensor Representation}\label{sec:method:tensors}

% The tensor $T \in \{0,1\}^{(2H-1) \times (2W-1) \times (2D-1)}$ interleaves node and edge states along each dimension, compatible with the chosen N-connectivity of the grid graph $G$. For any index $(x,y,z) \in [0,2H-2] \times [0,2W-2] \times [0,2D-2]$, we define
% \begin{equation}
% T_{x,y,z} =
% \begin{cases}
% 0 & \text{corresponds to contracted/deleted node/edge}, \\
% 1 & \text{otherwise}.
% \end{cases}
% \end{equation}
% Specifically, a position $(x,y,z)$ encodes:
% A \textbf{node} at original voxel $(j,k,l)$ if $x=2j$, $y=2k$, $z=2l$;

% \textbf{Edges} are encoded in the tensor at indices for which any index is an odd value to indicate an edge between the node indices it occurs at based on connectivity.

% The exact set of odd-index positions depends on the chosen N-connectivity; only valid neighbor directions from the N-connected grid are encoded. The tensor $T$ is initialized to all ones, representing the full original grid graph $G$ corresponding to image $I$ ($I \leftrightarrow G$).

\subsection{Expanded Tensor Representation}\label{sec:method:tensors}

The tensor $T \in \{0,\cdots, 255\}^{(2H-1) \times (2W-1) \times (2D-1)}$ interleaves node and edge states along each dimension by encoding a bitflag with the state of the element.

Positions with all even indices $(2j, 2k, 2l)$ encode \textbf{nodes} corresponding to voxel $(j,k,l)$. Positions with at least one odd index encode \textbf{edges}: an odd index along a single axis indicates a face-adjacent edge (e.g., $(2j+1, 2k, 2l)$ encodes the edge between voxels $(j,k,l)$ and $(j+1,k,l)$), while odd indices along multiple axes indicate diagonal edges under higher connectivity. The exact set of valid edge positions depends on the chosen N-connectivity ($N \in \{6, 10, 18, 26\}$). Each entry is binary:
% \begin{equation}
% T_{x,y,z} =
% \begin{cases}
% 0 & \text{node/edge contracted or deleted}, \\
% 1 & \text{otherwise}.
% \end{cases}
% \end{equation}
% The tensor is initialized to all ones, representing the complete grid graph $G$.

This representation avoids explicit storage of voxel coordinates and edge lists, reducing memory to $O(|V(G)| + |E(G)|)$ with single-byte entries. Flood-fill visits each edge at most once and terminates immediately upon revisiting assigned voxels, yielding $O(|E(G)| - |E(H)|)$ operations---complexity that scales with contractions performed rather than image resolution. In practice, minor construction on large volumes completes in under one second on CPU using a Rust backend called from Python. Graph-minor construction is performed once after $\Theta_{\mathrm{opt}}$ is fixed, and the GPU receives precomputed graph batches for training and inference; CPU--GPU transfer therefore does not interrupt the forward pass. This makes minor construction negligible compared to downstream inference.

\subsection{Graph Minor Construction}\label{sec:method:minor}

The initial conversion of an image $I$ to a graph represents each voxel as a node and connects neighboring voxels according to a configurable N-connectivity in 3D, where $N \in \{6,10,18,26\}$ denotes the number of adjacent neighbors (face-, face+edge-, face+edge+corner-adjacent, etc.). The choice of $N$ is modality-dependent and ablated in Section~\ref{sec:experiments:ablation}; we default to $N{=}6$. The resulting grid graph $G = (V(G), E(G))$ is defined as follows:
\begin{equation}\label{eq:gridgraph}
\begin{split}
&V(G) = \{(j,k,l): 1\leq j \leq H,1 \leq k \leq W, 1 \leq l \leq D \}, \\
&E(G) = (u=(j,k,l), v=(j',k',l')) \in V(G) \times V(G) \\
&\text{s.t. } u \text{ and } v \text{ are N-connected neighbors}.
\end{split}
\end{equation}

The core data-representation method of this work treats the graph derived from the image (via its expanded tensor $T$) as the parent graph $G$ and constructs a graph minor $H \preceq G$ through a sequence of operations: edge contraction, node deletion, and edge deletion. These operations are performed during a pseudo-random coprime traversal of the nodes to avoid directional bias (e.g., rightward, downward, or backward smearing artifacts in the resulting superpixels).

Edge contraction grows each supernode from a seed voxel. A neighboring voxel $p$ is merged into the current supernode seeded at $s$ when it is connected by an edge in the current graph and satisfies
\begin{equation}
\|I_p - I_s\|_n \leq \psi,
\end{equation}
where $I_p,I_s \in \mathbb{R}^C$ are voxel intensity vectors, $\|\cdot\|_n$ is a configurable $L_n$ norm (e.g., Euclidean $n=2$, Manhattan $n=1$, Chebyshev $n=\infty$), and $\psi$ is the contraction threshold parameter from $\Theta$. Contraction is therefore anchored to the seed voxel rather than to a running supernode mean. This prevents gradual low-contrast transitions from collapsing into a single merged region and instead tends to preserve them as chains of adjacent supernodes.

After contraction, node deletion removes supernodes violating retention criteria. Let $\beta = (\beta_{\min}, \beta_{\max}, m_{\min}, m_{\max})$, where $\beta_{\min},\beta_{\max}$ control area thresholds and $m_{\min},m_{\max}$ control intensity thresholds. 
\begin{equation}
\begin{split}
    V_{\mathrm{del}} := \big\{ v \in V(H) : a_v < \beta_{\min} \lor a_v > \beta_{\max} \\ \lor \bar{I}_v < m_{\min} \lor \bar{I}_v > m_{\max} \big\},\\
    V(H) \leftarrow V(H) \setminus V_{\mathrm{del}}.
    \end{split}
\end{equation}
This step removes small noisy regions (low area), large background regions (high area), and supernodes with intensities outside the desired range. The lower area bound is specifically intended to suppress acquisition noise: isolated voxels that fail the contraction criterion form singleton or very small supernodes and are pruned before downstream inference. Deleted nodes receive background by default during lifting, so this mechanism is conservative rather than a source of artificial foreground improvement. Finally, edge deletion defines segmentation boundaries by removing edges across strong intensity differences:
\begin{equation}
\begin{split}
    E_{\mathrm{del}} := \big\{ (v_i, v_j) \in E(H) : \|\bar{I}_{v_i} - \bar{I}_{v_j}\|_n > \alpha \big\},\\
    E(H) \leftarrow E(H) \setminus E_{\mathrm{del}},
    \end{split}
\end{equation}
where $\alpha$ is the edge deletion threshold from $\Theta$. Removed edges separate supernodes across significant intensity gradients, effectively partitioning $H$ into distinct components aligned with image boundaries.

\subsubsection{Few-shot Representational Parameter Learning}\label{sec:method:minor_construction:fewShot}
\begin{figure}
    \centering
    \includegraphics[width=0.49\linewidth, trim={0cm 2cm 25cm 2cm}, clip]{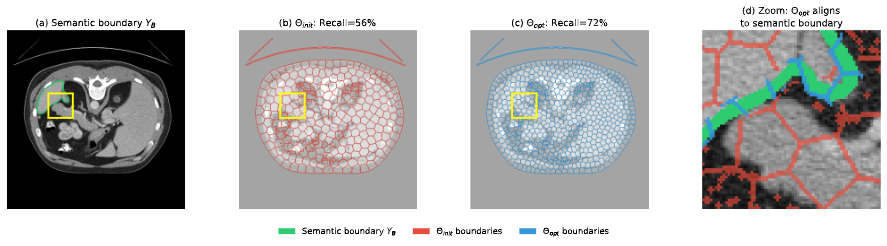}
    \includegraphics[width=0.49\linewidth, trim={8.5cm 2cm 16.5cm 2cm}, clip]{lits_boundary.png}
    \includegraphics[width=0.49\linewidth, trim={16.5cm 2cm 8.5cm 2cm}, clip]{lits_boundary.png}
    \includegraphics[width=0.49\linewidth, trim={25.6cm 2.5cm 0cm 2cm}, clip]{lits_boundary.png}
    \includegraphics[width=1\linewidth, trim={9.5cm 0cm 9.5cm 7.6cm}, clip]{lits_boundary.png}
    \caption{Boundary alignment visualization on a LiTS case. (top left) Ground-truth semantic boundary $Y_B$. (top right) Supernode boundaries induced by naive parameters $\Theta_{\mathrm{init}}$. (bottom left) Supernode boundaries induced by  optimized $\Theta_{\mathrm{opt}}$. (bottom right) Zoomed comparison. $\Theta_{\mathrm{opt}}$ boundaries align with the semantic edge, while $\Theta_{\mathrm{init}}$ boundaries cut through the tumor boundary. }
    \label{fig:boundary_align}
\end{figure}

Manual specification of the parameter set $\Theta$ for the graph minor generation function $S(\cdot)$ is labor-intensive and imprecise. Instead, we frame parameter selection as black-box optimization over the discrete parameter space with a tree-based surrogate $R(\cdot)$ to minimize boundary misalignment on a small held-out subset $\mathcal{D}_{\mathrm{few}} \subset \mathcal{D}$. The optimized parameters are obtained via:
\begin{equation}
\Theta_{\mathrm{opt}} = \arg\min_{\Theta} \ \mathbb{E}_{(I,Y) \sim \mathcal{D}_{\mathrm{few}}} \Bigl[ L\bigl( S_B(T, \Theta), Y_B \bigr) \Bigr],
\end{equation}
where $T$ is the expanded tensor representation of $I$, $Y_B$ is the ground-truth binary boundary map derived from the corresponding label map $Y$ (target-specific), and the loss is the inverse Dice-Sørensen coefficient (DSC):
\begin{equation}
L(\hat{Y}_B, Y_B) = 1 - \mathrm{DSC}(\hat{Y}_B, Y_B) = 1 - \frac{2 |\hat{Y}_B \cap Y_B|}{|\hat{Y}_B| + |Y_B|}.
\end{equation}
This boundary-focused loss encourages the minor generation process to produce superpixels whose boundaries match the semantic edges in the ground truth, independent of the specific class identities in $\{0,\dots,K-1\}$. Crucially, $\Theta$ does not configure a fixed segmentation model---it parameterizes a family of graph homomorphisms $\pi_\Theta: G \to H_\Theta$, each inducing a distinct partition of the voxel lattice. Few-shot optimization over $\Theta$ thus constitutes \emph{representation learning over structured latent spaces}: the optimizer selects a partition structure that minimizes semantic boundary risk, rather than tuning hyperparameters within a fixed architecture.
\subsubsection{Graph Minor Features}\label{sec:method:features}
Given the optimized parameter set $\Theta_{\mathrm{opt}}$, we execute the graph minor generation $H = S(T, \Theta_{\mathrm{opt}})$, to extract a compact set of node/edge features for downstream prediction.

During the contraction phase of $S(\cdot)$, voxel counts, boundary exposures, coordinates, and intensities are aggregated per supernode. Spatial coordinates are used transiently to compute the covariance matrix of voxel positions, from which the dominant axis and elongation are derived. Intensity statistics (per-channel standard deviation and covariance) are computed directly from the voxel intensities in each supernode. For each supernode $u \in V(H)$ with associated voxels $\mathcal{P}_u = \{p = (j,k,l) : p \text{ belongs to } u\}$ (where $p$ is the voxel coordinate and $I(p) \in \mathbb{R}^C$ its intensity vector), we define the following stored features:

\begin{equation}
\begin{split}
a_u &:= |\mathcal{P}_u|\\
\sigma_u &:= \sqrt{\frac{1}{a_u} \sum_{p \in \mathcal{P}_u} (I(p) - \bar{I}_u) \odot (I(p) - \bar{I}_u)},  \\
\Sigma_u &:= \frac{1}{a_u} \sum_{p \in \mathcal{P}_u} (I(p) - \bar{I}_u)(I(p) - \bar{I}_u)^\top \in \mathbb{R}^{C \times C}, 
\end{split}
\end{equation}
$a_u$ is the supernode area/voxel count, $\sigma_u$ is the per-channel standard deviation of intensity, and $\Sigma_u$ is the intensity covariance matrix where $\bar{I}_u = \frac{1}{a_u} \sum_{p \in \mathcal{P}_u} I(p) \in \mathbb{R}^C$ is the mean intensity vector (computed transiently for variance and covariance). The spatial covariance $\Sigma_u^\text{coord} \in \mathbb{R}^{3 \times 3}$ is computed transiently from voxel coordinates. Let $\lambda_{u,1} \geq \lambda_{u,2} \geq \lambda_{u,3} \geq 0$ be the eigenvalues of $\Sigma_u^\text{coord}$ with corresponding eigenvectors $v_{u,1}, v_{u,2}, v_{u,3}$. We extract:
\begin{equation}
\begin{split}
    d_u &:= v_{u,1} \in \mathbb{R}^3, \\
    \mathrm{elong}_u &:= \sqrt{(\lambda_{u,1} + \varepsilon)/(\lambda_{u,2} + \varepsilon)}, \\
    p_u^\star &:= \mathrm{argmin}_{(j,k,l) \in \mathcal{P}_u}; \text{lexicographic order}
    \end{split}
\end{equation}
%Where $d_u$ is the dominant axis: unit eigenvector for largest eigenvalue, $\mathrm{elong}_u$ is an elongation ratio with $\varepsilon > 0$ for numerical stability, and $p_u^\star$ is a canonical voxel of the supernode.
Where $d_u$ is the unit eigenvector of the largest eigenvalue, $\mathrm{elong}_u$ is an elongation ratio with $\varepsilon > 0$ for stability, and $p_u^\star$ is the supernode’s canonical voxel. Boundary length, $b_u$,  and a 3D compactness proxy (inverse spikiness), $\mathrm{comp}_u$, are computed from the original grid graph $G$ with $\varepsilon > 0$ for numerical stability. 
\begin{equation}
\begin{split}
b_u := \big| \{(p,q) \in E(G) : p \in \mathcal{P}_u,\ q \notin \mathcal{P}_u \} \big|.\\
\mathrm{comp}_u := \frac{36\pi a_u^2}{b_u^3 + \varepsilon} \in (0,1]
\end{split}
\end{equation}

While node features capture absolute properties of individual supernodes, edge features encode relative differences between adjacent supernodes, promoting scale- and rotation-invariant representations suitable for the graph neural network. For each edge $e = (u,v) \in E(H)$, we first order the incident supernodes by area, and for any scalar node feature we compute a scale-invariant log-ratio. We calculate relative geometric and orientation edge features for each scalar node feature. Full descriptions can be found in Appendix \ref{app:features}.

\subsection{Downstream Prediction}\label{sec:method:prediction}
% The generated graph minor $H$ supports two distinct approaches to image segmentation: direct node-level prediction using a graph neural network, or using supernodes as regions of interest to guide conventional segmentation methods.

% \subsubsection{Direct Prediction via Graph Neural Networks}
%Given the learned graph minor representation $H$, we apply a graph neural network to perform node classification over the supernodes. The final voxel-wise segmentation map is obtained by lifting these supernode predictions back to the original image grid using the bijective mapping encoded in $T$:
Given the learned graph minor $H$, we apply a graph neural network for supernode classification. The final voxel-wise segmentation is obtained by lifting these predictions back to the image grid via the bijective mapping in $T$:
\begin{equation}
\hat{Y}_H = \text{GNN}(H), \ \hat{Y} = \mathrm{Lift}(H, \hat{Y}_H, T).
\end{equation}
where $\hat{Y}_H \in \{0,\dots,K-1\}^{|V(H)|}$ denotes the predicted class for each supernode $u \in V(H)$.
This lifting operation assigns the predicted class of each supernode to all voxels belonging to that supernode, ensuring consistent labeling within each superpixel. During training, the GNN loss and performance metrics are computed by comparing the lifted voxel predictions $\hat{Y}$ against the ground-truth $Y^{(n)}$.

\noindent {\bf Composable multi-class inference.}
While we train separate binary models per target structure, SEMIR supports compositional multi-class inference by constructing multiple task-adapted minors and resolving overlapping predictions through confidence-weighted voting or energy minimization over the induced partition lattice. This compositional design is intentional: rather than forcing a single representation to balance competing objectives across structures with vastly different spatial extent and boundary statistics, SEMIR constructs a representation optimized for each target. For $K$ target structures, compositional inference requires $K$ minor-construction and GNN passes. The scaling is therefore linear in the number of targets, but each pass operates on the induced graph rather than the full voxel lattice, and the final lifting and overlap-resolution step adds negligible overhead relative to graph construction and GNN inference.

\subsection{Theoretical Properties}\label{sec:method:theory}

Superpixel segmentation is inherently ill-posed, requiring trade-offs between boundary adherence, compactness, size uniformity, and tractability \cite{STUTZ2018superpixels, wang2017superpixel}. Our graph-minor formulation makes these trade-offs explicit and learnable rather than implicit and manual. The parameter set $\Theta = \{\psi, \alpha, \beta\}$ explicitly parameterizes these trade-offs: edge contraction ($\psi$) promotes compact supernodes by merging similar voxels, edge deletion ($\alpha$) enforces boundary separation along intensity gradients, and node deletion ($\beta$) prunes outliers violating size constraints. Crucially, $\Theta$ does not tune a fixed model; it defines the hypothesis class of feasible partitions. Few-shot optimization over $\Theta$ thus constitutes learning the structure of the inference space itself, not merely selecting hyperparameters within a fixed architecture.

\noindent {\bf Few-shot generalization of $\Theta$.}
The few-shot claim concerns the optimization process, not transfer of a single fixed parameter vector across datasets. The parameter set $\Theta$ is low-dimensional and physically constrained: $\psi$ controls seed-anchored contraction, $\alpha$ controls boundary separation, and $\beta$ controls size and intensity retention. Thus, the induced family of partitions is much smaller than the class of arbitrary voxel labelings. Few-shot optimization therefore searches over stable boundary statistics rather than class-conditional appearance, which explains why 5--20 labeled examples are sufficient in our experiments while still allowing $\Theta_{\mathrm{opt}}$ to vary across modality, resolution, and target structure.

\begin{lemma}[Supernode Connectivity]\label{lem:connectivity}
For any supernode $u \in V(H)$, the pre-image $\pi^{-1}(u) \subseteq V(G)$ induces a connected subgraph of the N-connected grid $G$.
\end{lemma}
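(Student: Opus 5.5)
The argument is an induction over the region-growing (flood-fill) process that defines edge contraction. The preimage $\pi^{-1}(u)$ of a supernode $u$ is exactly the set of voxels absorbed into the supernode seeded at some voxel $s$. I will show that after every step of the growth from $s$, the current voxel set $P_t$ induces a connected subgraph of $G$.

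\textbf{Base case.} Initially $P_0=\{s\}$, a single vertex, which is trivially connected.

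\textbf{Inductive step.} A voxel $p$ is added to $P_t$ only if it is joined by an edge of the current graph to some voxel $q\in P_t$. It must also satisfy $\|I_p-I_s\|_n\le\psi$, but that condition is irrelevant to connectivity. The current graph during contraction is still the N-connected grid, since edge deletion happens only afterwards and at the supernode level. So $(q,p)\in E(G)$, and $P_{t+1}=P_t\cup\{p\}$ is connected: any vertex reaches $q$ inside $P_t$, and $q$ reaches $p$ by one grid edge. Because the traversal is finite, the final set $\mathcal{P}_u=\pi^{-1}(u)$ is connected.

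\textbf{The later operations.} The remaining work is to check that node deletion and edge deletion do not break this invariant.
\begin{itemize}
\item Node deletion removes whole supernodes. Every surviving $u$ keeps its full preimage, and deleted voxels have no preimage in $V(H)$ (they are sent to background by Lift).
\item Edge deletion removes edges of $H$ between distinct supernodes. It never touches the $G$-edges internal to $\mathcal{P}_u$, and the lemma concerns the subgraph of $G$, not of $H$.
\end{itemize}
Two further details need stating:
\begin{itemize}
\item The coprime traversal only fixes the order in which seeds are chosen. A voxel already assigned to one supernode is never reassigned, since the flood-fill ``terminates upon revisiting assigned voxels''. Hence the preimages are disjoint and each is built by one growth process.
\item The contraction uses edges of the N-connected grid, so connectivity holds with respect to that same $N$.
\end{itemize}

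\textbf{Main obstacle.} The difficulty is definitional rather than technical. The paper does not formally pin down $\pi$, so I would state explicitly that $\pi$ maps each voxel to the supernode that absorbed it. I would also note that contraction precedes edge deletion, which guarantees that growth follows genuine grid edges.
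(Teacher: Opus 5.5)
Your proposal is correct and follows the same route as the paper, whose proof says in one line that the flood-fill of Algorithm~\ref{alg:flood-fill} merges only N-adjacent voxels, so the merged set is connected by construction; your induction over the growth steps, together with your checks that node and edge deletion leave each preimage intact, simply makes that argument explicit. One small correction: in Algorithm~\ref{alg:flood-fill} the edge-deleted marks are written during the fill rather than afterwards, so "the current graph is still the full grid" is not quite accurate, but this is harmless because your inductive step only needs each merging edge to lie in $E(G)$.
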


\begin{proof}
Supernodes are constructed via flood-fill from a seed voxel (Algorithm~\ref{alg:flood-fill}). At each iteration, only N-adjacent voxels satisfying the contraction threshold $\psi$ are merged. Since N-adjacency in the grid graph $G$ implies an edge in $E(G)$, and flood-fill proceeds by traversing such edges, the set of merged voxels $\pi^{-1}(u)$ forms a connected subgraph of $G$ by construction.
\end{proof}

\begin{theorem}[Lifting Exactness]\label{thm:lifting}
Let $H = S(G, \Theta)$ be the induced minor over retained supernodes, with predictions $\hat{Y}_H: V(H) \to \{0, \ldots, K-1\}$. The lifting operator $\mathrm{Lift}(H, \hat{Y}_H, T)$ assigns labels to voxels exactly: for any $u \in V(H)$ with prediction $\hat{y}_u$, every voxel $v \in \pi^{-1}(u)$ receives label $\hat{y}_u$ with no interpolation or approximation.
\end{theorem}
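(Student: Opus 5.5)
The argument rests on one fact: contraction induces a well-defined map $\pi: V(G)\to V(H)\cup\{\bot\}$, where $\bot$ marks voxels whose supernode was deleted. Lifting is then just composition with $\hat{Y}_H$, so there is nothing to interpolate.

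\textbf{Step 1: $\pi$ is a function.} Flood-fill labels each voxel at most once, since it terminates on revisiting an already assigned voxel. Every voxel is either a seed or absorbed into exactly one supernode during the coprime traversal. I would record this assignment in the node entries $T_{2j,2k,2l}$ of the expanded tensor, as a supernode ID bitflag. Then $\pi(p)$ is single-valued and total on $V(G)$.

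\textbf{Step 2: the fibres partition the retained voxels.} The fibres $\{\pi^{-1}(u)\}_{u\in V(H)}$ are pairwise disjoint because $\pi$ is single-valued. Their union is $V(G)\setminus\pi^{-1}(\bot)$. Lemma~\ref{lem:connectivity} adds that each fibre is connected in $G$, so the fibres are genuine contraction classes and $H$ is a minor of $G$. Exactness itself needs only disjointness and coverage.

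\textbf{Step 3: the invariance properties.}
\begin{itemize}
\item Node deletion removes whole supernodes. It sends their fibres to $\bot$ and leaves every surviving fibre unchanged.
\item Edge deletion acts only on $E(H)$, i.e.\ on odd-index entries of $T$, so it does not change $\pi$ at all.
\end{itemize}
Hence the fibres used for lifting are exactly those created by contraction.

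\textbf{Step 4: lifting is composition.} Define $\mathrm{Lift}(H,\hat{Y}_H,T)(p)=\hat{Y}_H(\pi(p))$ if $\pi(p)\neq\bot$, and background otherwise. For $u\in V(H)$ and $v\in\pi^{-1}(u)$ we have $\pi(v)=u$, so $v$ receives $\hat{y}_u$. Because $\pi$ is single-valued, no voxel gets two conflicting labels. The assignment is a pointwise table lookup, with no averaging or resampling, so there is no interpolation or approximation.

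The main obstacle is Step 1 together with the interaction in Step 3. The argument needs the implementation to guarantee that every voxel is assigned once and never reassigned. It also needs node and edge deletion to leave surviving fibres intact rather than re-partitioning voxels. I would argue both from the flood-fill invariant, namely that an assigned voxel is never revisited, and from the order of operations: contraction comes first and fixes $\pi$, and deletions only remove elements afterwards.
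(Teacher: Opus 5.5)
Your proof is correct and follows the same route as the paper's proof. Both arguments say that $T$ records the flood-fill assignment, so each retained voxel has a unique containing supernode, and lifting just assigns that supernode's label with no interpolation. Your version is more explicit than the paper's single paragraph on two points it leaves implicit: deleted supernodes (your $\bot$ fibre), and the fact that node and edge deletion leave the surviving fibres intact.
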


\begin{proof}
The bijection tensor $T$ encodes the supernode membership of each voxel via the flood-fill assignment. For retained supernodes, $T$ provides a surjective map from voxels to supernodes; lifting inverts this by assigning each voxel the label of its unique containing supernode. No interpolation is required because supernode boundaries are defined exactly by the contraction process, and each voxel belongs to exactly one supernode.
\end{proof}

%\paragraph{Structure-complexity duality.}
\noindent {\bf Structure-complexity duality.}
SEMIR formalizes a fundamental duality: inference cost scales with the topological complexity of semantic boundaries rather than the resolution of the underlying lattice. Dense methods pay for every voxel; SEMIR pays only for structure.

%\paragraph{Additional properties.} 
\noindent {\bf Additional properties.} 
Beyond the formal results above, the construction provides: (i)~\textit{Adaptivity}---few-shot optimization tunes $\Theta$ to domain-specific boundary statistics without retraining the downstream predictor; (ii)~\textit{Invariance}---relative edge features (log-ratios of geometric descriptors, normalized intensity differences) reduce sensitivity to absolute scale and orientation, valuable for anisotropic medical volumes; (iii)~\textit{Tractability}---near-linear complexity in voxel count (see Section~\ref{sec:method:tensors}).

%\paragraph{Limitations.} 
\noindent {\bf Limitations.} 
The construction lacks the global optimality guarantees of energy-minimization approaches \cite{kostrykin2022superadditivity}. Pseudo-random traversal order introduces stochastic variation across runs. Empirically, variation across runs with different traversals was negligible. Boundary adherence may degrade in low-contrast regions if $\alpha$ is poorly calibrated, and excessive contraction ($\psi$ too permissive) can oversmooth fine structures. Higher N-connectivity (18- or 26-connected) better preserves anisotropic structures at the cost of increased memory. In our implementation, increasing connectivity increases the expanded tensor and edge-state storage, with limited marginal value for volumetric medical images; $N{=}6$ therefore remains the recommended default for CT and MRI. For thin-structure 2D settings where diagonal adjacency is more important, higher connectivity can be preferable. Ablation studies (Section~\ref{sec:experiments:ablation}) empirically quantify these trade-offs.

\section{Experiments}\label{sec:experiments}

\subsection{Datasets}\label{sec:experiments:datasets}
\begin{table}
\centering
\caption{Dataset characteristics.}
\label{tab:datasets}
\begin{tabular}{l c c c}
    \toprule
    & \textbf{BraTS} & \textbf{KiTS} & \textbf{LiTS} \\
    \midrule
    Source        & MRI & CT & CT  \\
    Focus         & Brain              & Kidney & Liver \\
    Classes         & Glioma            & Tumor/cyst & Tumor\\
    Trn/Val    & 1,251 / --          & 489 / 110           & 131 / 70 \\
    Slice size     & $240\text{x}240$   & $512\text{x}512$ & $512\text{x}512$ \\
    Res. (mm) & $1\text{x}1\text{x}1$         & variable       & variable\\
    \bottomrule
\end{tabular}
\end{table}
We evaluate on three volumetric medical imaging benchmarks spanning MRI and CT modalities with varying resolution, anisotropy, and class imbalance characteristics as listed in Table \ref{tab:datasets}. KiTS23 and LiTS use their documented benchmark splits. BraTS 2021 does not provide an official train/validation split; we therefore use an 80/20 train/validation partition, and BraTS comparisons to published methods should be interpreted as contextual rather than split-controlled.

\noindent\textbf{BraTS 2021} provides multi-parametric MRI at isotropic 1mm resolution for glioma segmentation (WT/TC/ET) \cite{baid2021rsnaasnrmiccaibrats2021benchmark, Bonato2025review}. \textbf{KiTS23} includes portal venous and nephrogenic CT phases with substantial inter-case variability in slice count and spacing \cite{heller2023kits21challengeautomaticsegmentation}. \textbf{LiTS} exhibits intentionally heterogeneous acquisition parameters, with slice thickness ranging from 0.45--6.0mm \cite{BILIC2023102680}. Throughout the paper, we abbreviate \emph{BraTS}, \emph{KiTS}, and \emph{LiTS} to indicate these datasets.

\subsection{Implementation Details}\label{sec:experiments:implementation}

Experiments use an NVIDIA Tesla T4 GPU (16GB) with PyTorch. We train a separate binary model per target structure (foreground = target, background = all other voxels), reducing each segmentation task to structure-specific inference rather than joint multi-class prediction. This formulation sidesteps class imbalance by construction. See Appendix~\ref{app:implementation} for full reproducibility details.

\noindent {\bf Architecture and training.}
We use a 3-layer GINE \cite{xu2018how, Hu2020Strategies} with hidden dimension 128 for supernode prediction. Training uses Adam \cite{kingma2014adam} ($\text{lr}=10^{-3}$) for up to 200 epochs with early stopping (patience 10) on validation Dice.

\noindent {\bf Parameter optimization.}
Minor parameters $\Theta$ are optimized via SMBO with an ExtraTrees surrogate \cite{bergstra2011hyper, berrouachedi2019deep} on a few-shot subset ($|\mathcal{D}_{\mathrm{few}}| \in \{5, 10, 20\}$ depending on dataset). Optimized parameters are fixed for all subsequent training; no manual tuning is performed.

% \paragraph{Evaluation.}
% We report Dice coefficients per class, prioritizing minority structures (enhancing tumor in BraTS, tumor in KiTS/LiTS). 

% \subsection{Quantitative Results}\label{sec:experiments:quant}
\subsection{Results}\label{sec:experiments:quant}

\begin{table*}
\centering
\caption{Comparison across BraTS 2021, KiTS23, and LiTS benchmarks. Key: \textbf{Best}, \textit{Second}. 
BraTS regions: TC (Tumor Core), WT (Whole Tumor), ET (Enhancing Tumor).
KiTS regions: KTC (Kidney+Tumor+Cyst), TC (Tumor+Cyst), T (Tumor).
LiTS regions: LT (Liver+Tumor), T (Tumor).
\\
\textsuperscript{1}\cite{badaoui2025catbrats} 
\textsuperscript{2}\cite{tobias2025segmenting} 
\textsuperscript{3}\cite{ALWADEE2025LATUPNET} 
\textsuperscript{4}\cite{Bonato2025review}
\textsuperscript{5}\cite{nowakowski2025convolutional} 
\textsuperscript{6}\cite{zhang2025exploring} 
\textsuperscript{7}\cite{alonso2025submanifold} 
\textsuperscript{8}\cite{uhm2023exploring}
\textsuperscript{9}\cite{liu2025limt} 
\textsuperscript{10}\cite{yang2025automatic}
\textsuperscript{11}\cite{ren2025lgma}
\textsuperscript{12}\cite{ZHENG2025boundary}
\textsuperscript{13}\cite{zhang2025gtmamba}
\textsuperscript{14}\cite{Saifullah2025psounet}
\textsuperscript{15}\cite{Perera_2024_CVPR}
\textsuperscript{16}\cite{lilhore2025ag}
\textsuperscript{17}\cite{lei2022defednet}
\textsuperscript{18}\cite{li2018hdenseunet}
\textsuperscript{19}\cite{myorenko2024swin}
\textsuperscript{20}\cite{Kaczmarska2024swin}
\textsuperscript{21}\cite{pandey2024multi}
\textsuperscript{22}\cite{liao2024macgan}
\textsuperscript{23}\cite{trials2024}
}
\label{tab:results}
\begin{tabular}{l ccc| lccc| lcc}
    \toprule
    \multicolumn{4}{c|}{\textbf{BraTS 2021}} & \multicolumn{4}{c|}{\textbf{KiTS23}} & \multicolumn{3}{c}{\textbf{LiTS}} \\
    \textbf{Method} & \textbf{TC} & \textbf{WT} & \textbf{ET} & 
    \textbf{Method} & \textbf{KTC} & \textbf{TC} & \textbf{T} & 
    \textbf{Method} & \textbf{LT} & \textbf{T} \\
    \midrule
    Swin UNETR\textsuperscript{1}       & 0.682 & 0.823 & 0.748 &
    ConvOccNet\textsuperscript{5}       & 0.943 & 0.725 & 0.693 &
    Yang et al.\textsuperscript{10}     & 0.971 & 0.721 \\
    
    3DCATBraTS\textsuperscript{1}       & 0.784 & 0.851 & 0.792 &
    AlignUNet\textsuperscript{6}        & 0.781 & ----- & 0.613 &
    MA-UNet\textsuperscript{10}         & 0.960 & 0.729 \\
    
    E-CATBraTS\textsuperscript{1}       & 0.726 & 0.802 & 0.781 &
    Alonso\textsuperscript{7}           & \bd{0.958} & \il{0.856} & 0.803 &
    Lgma-net\textsuperscript{11}        & \bd{0.977} & 0.874 \\
    
    PAU-Net\textsuperscript{2}          & 0.879 & 0.915 & 0.835 &
    3DU-Net Ens.\textsuperscript{8}     & \il{0.948} & 0.776 & 0.738 &
    DefED-Net\textsuperscript{17}       & 0.963 & \il{0.875} \\
    
    LATUP-Net\textsuperscript{3}        & 0.895 & 0.902 & 0.839 &
    Swin UNETR\textsuperscript{20}      & 0.762 & 0.439 & 0.343 &
    Swin-UNet\textsuperscript{12}       & 0.967 & 0.857 \\
    
    Ext. nnUNet\textsuperscript{4}      & 0.878 & 0.927 & 0.845 &
    Multi-Planner\textsuperscript{21}   & 0.831 & 0.130 & 0.260 &
    Li \& Zhao\textsuperscript{12}      & 0.967 & 0.865 \\

    \midrule
    \multicolumn{4}{l|}{\textit{Transformer / State-Space}} &
    \multicolumn{4}{l|}{\textit{Challenge Winners}} &
    \multicolumn{3}{l}{\textit{Transformer / Hybrid}} \\
    
    GTMamba\textsuperscript{13}         & \il{0.940} & \il{0.943} & 0.884 &
    Auto3DSeg\textsuperscript{19}       & 0.926 & 0.784 & 0.751 &
    HDenseUNet\textsuperscript{18}     & 0.961 & 0.722 \\
    
    PSO-UNet\textsuperscript{14}        & ----- & \bd{0.958} & ----- &
    Uhm et al.\textsuperscript{8}       & \il{0.948} & 0.776 & 0.738 &
    MAcGAN\textsuperscript{23}         & 0.970 & 0.785 \\
    
    SegFormer\textsuperscript{15}     & 0.822 & 0.899 & 0.742 &
    & & & &
    SwinUNETR\textsuperscript{23}       & 0.867 & 0.742 \\
    
    MS3D-CNN\textsuperscript{16}     & 0.912 & 0.928 & 0.857 &
    & & & &
    MedNeXtL\textsuperscript{23}       & 0.946 & 0.779 \\
    
    \midrule
    \textbf{SEMIR (ours)} & \bd{0.941} & 0.920 & \bd{0.894} &
    \textbf{SEMIR} & 0.945 & \bd{0.861} & \bd{0.819} &
    \textbf{SEMIR} & 0.959 & \bd{0.891} \\
    \bottomrule
\end{tabular}
\end{table*}

\begin{table}
\centering
\caption{Controlled binary target-vs-rest comparison against nnU-Net under identical splits. Runtime reports end-to-end training time in the practical hardware regime used for each method. $\dagger$SEMIR completed on a 16GB T4 GPU; nnU-Net required A100-class hardware to complete in practical time.}
\label{tab:controlled-nnunet}
\small
\begin{tabular}{l c c c c}
\toprule
\textbf &  & \textbf{nnU-Net} & \textbf{SEMIR} &  \\
\textbf{Dataset} & \textbf{Target} & \textbf{DSC} & \textbf{DSC} & \textbf{Time} \\

\midrule
BraTS & ET & 0.812 & $0.894{\pm}.006$ & 43h/2.5h$\dagger$\\
BraTS & TC & 0.829 & $0.941{\pm}.002$ & 39h/1.6h$\dagger$\\
KiTS  & T  & 0.720 & $0.819{\pm}.006$ & 19h/0.8h$\dagger$\\
LiTS  & T  & 0.733 & $0.891{\pm}.007$ & 11h/0.6h$\dagger$\\
\bottomrule
\end{tabular}
\end{table}

We restrict comparisons to methods reporting per-class Dice, as aggregate metrics mask failures on minority structures (cf.\ Swin UNETR on KiTS: 0.762 aggregate vs.\ 0.343 tumor). Table~\ref{tab:results} provides contextual comparison to published methods under their reported protocols, while Table~\ref{tab:controlled-nnunet} provides the controlled binary target-vs-rest comparison under identical splits.

Across all three benchmarks, SEMIR consistently improves performance on the clinically relevant minority targets while remaining competitive on the corresponding organ-plus-target aggregates. On BraTS (Table~\ref{tab:results}), SEMIR attains the strongest performance on TC and ET, indicating improved delineation of tumor core structures that are typically the most fragile under class imbalance and heterogeneous appearance. Notably, gains are concentrated on the more difficult tumor-derived regions rather than being driven solely by the largest whole-tumor mask.

On KiTS and LiTS (Table~\ref{tab:results}), the same pattern holds: SEMIR yields substantial improvements on the tumor-only class while remaining within the competitive range on organ-plus-tumor aggregates. This is by design: SEMIR does not optimize for aggregate metrics across all classes. Instead, each model targets a single structure, and we report the metric that matters for that target. SEMIR's structure-targeted approach avoids this failure mode. Qualitative examples and ablations in the following sections further characterize where these improvements arise and how they depend on graph construction choices.

SEMIR reduces inference from $\sim 10^7$ to $\sim10^3$ supernodes—a $10^4\times$ reduction in node count compared to dense methods (Appendix~\ref{app:complexity}). This complexity scales with image structure rather than resolution, enabling efficient processing of high-resolution volumes.

Ablation studies (Section~\ref{sec:experiments:ablation}) confirm that our method transfers to non-medical domains: on NWPU VHR-10 aerial imagery, the learned minor achieves 0.862 small-object IoU, demonstrating that boundary-aligned minor construction is not specific to volumetric medical data.   

% \subsection{Qualitative Results}\label{sec:experiments:qual}

Figures~\ref{fig:pipeline_vis} and~\ref{fig:boundary_align} visualize SEMIR's representation learning. The pipeline figure shows how the boundary-aligned minor $H$ reduces a KiTS volume to a sparse supernode graph while preserving structure for accurate tumor delineation. The boundary alignment figure demonstrates the effect of few-shot optimization: $\Theta_{\mathrm{opt}}$ produces supernodes whose boundaries track semantic edges, whereas naive parameters $\Theta_{\mathrm{init}}$ cut arbitrarily through the tumor boundary.

\subsection{Ablation Studies}\label{sec:experiments:ablation}

We ablate key components on BraTS (minority classes: ET, TC) and a 100-sample subset of NWPU VHR-10~\cite{CHENG2014geospatial}, an aerial detection benchmark, to verify that SEMIR's minor construction generalizes beyond medical imaging to 2D RGB imagery with small, sparse targets. Full results in Appendix~\ref{app:ablation}.

% \begin{table}[t]
% \centering
% \caption{Ablation summary. BraTS ET Dice and NWPU VHR-10 small-object IoU.}
% \label{tab:ablation-summary}
% \small
% \begin{tabular}{l c c}
% \toprule
% \textbf{Ablation} & \textbf{BraTS ET} & \textbf{NWPU} \\
% \midrule
% Full SEMIR & \textbf{0.894} & \textbf{0.862} \\
% \midrule
% \multicolumn{3}{l}{\textit{Minor operations}} \\
% \quad No edge contraction & 0.441 & 0.408 \\
% \quad No edge deletion & 0.719 & 0.681 \\
% \quad No node deletion & 0.812 & 0.749 \\
% \midrule
% \multicolumn{3}{l}{\textit{Parameter optimization}} \\
% \quad Learned ($|\mathcal{D}_{\mathrm{few}}|{=}5$) & 0.894 & 0.789 \\
% \quad Fixed (best manual) & 0.837 & 0.763 \\
% \midrule
% \multicolumn{3}{l}{\textit{Features}} \\
% \quad No relative edge features & 0.725 & 0.741 \\
% \quad No spatial features & 0.661 & 0.629 \\
% \bottomrule
% \end{tabular}
% \end{table}

\begin{table}[t]
\centering
\caption{Ablation summary. BraTS ET Dice and NWPU VHR-10 IoU.}
\label{tab:ablation-summary}
\small
\begin{tabular}{ll c c}
\toprule
&\textbf{Ablation} & \textbf{BraTS ET} & \textbf{NWPU} \\
\midrule
& Full SEMIR & \textbf{0.894} & \textbf{0.862} \\
\midrule
\textit{Minor} &
No edge contraction & 0.441 & 0.408 \\
\textit{operations}& No edge deletion & 0.719 & 0.681 \\
& No node deletion & 0.812 & 0.749 \\
\midrule
\textit{Parameter} &
Learned ($|\mathcal{D}_{\mathrm{few}}|{=}5$) & 0.894 & 0.789 \\
\textit{optimization}& Fixed (best manual) & 0.837 & 0.763 \\
\midrule
\textit{Features} &
No edge features & 0.725 & 0.741 \\
& No spatial features & 0.661 & 0.629 \\
\bottomrule
\end{tabular}
\end{table}

\textbf{Key findings.} Disabling edge contraction causes severe fragmentation (ET Dice drops 51\%). Few-shot optimization outperforms manual tuning even with 5 samples. Spatial features (compactness, elongation) are essential for irregular structures, with 24--27\% degradation when removed. Higher N-connectivity (18 vs.\ 6) improves small-object IoU by 3--5\% at 2.5$\times$ runtime cost; $L_\infty$ norm shows strongest robustness on multi-channel data.

\section{Conclusion}\label{sec:conclusion}

We introduced SEMIR, a framework that constructs a learned graph minor over the voxel lattice, enabling segmentation inference on a compact, boundary-aligned representation with exact lifting back to voxel-level predictions. Few-shot optimization of minor construction parameters replaces manual superpixel tuning with a principled, data-driven procedure. Across three tumor segmentation benchmarks, SEMIR yields consistent improvements on structure-specific Dice for clinically critical targets—the regime where multi-class methods suffer from imbalance-induced gradient attenuation. By reducing each task to binary segmentation on a target-adapted graph, SEMIR sidesteps the balancing problem entirely—while reducing inference to 1–3\% of the original node count.

Limitations include sensitivity to boundary statistics in the few-shot set and evaluation restricted to volumetric CT/MRI. The current formulation decouples minor construction from downstream prediction; this modularity means the learned minor can serve as a preprocessing stage for any downstream model (CNNs, transformers, GNNs), though end-to-end joint optimization remains an obvious target for future work. Extension to additional modalities (histopathology, ultrasound) and theoretical analysis of minor structure under varying acquisition conditions warrant further exploration.

% Authors are \textbf{required} to include a statement of the potential broader
% impact of their work, including its ethical aspects and future societal
% consequences. This statement should be in an unnumbered section at the end of
% the paper (co-located with Acknowledgements -- the two may appear in either
% order, but both must be before References), and does not count toward the paper
% page limit. In many cases, where the ethical impacts and expected societal
% implications are those that are well established when advancing the field of
% Machine Learning, substantial discussion is not required, and a simple
% statement such as the following will suffice:

\section*{Impact Statement}

This work aims to improve segmentation of minority structures in medical images, with potential applications in tumor detection and treatment planning. Two considerations merit attention: (1) the benchmarks used do not represent a global sample; performance on underrepresented populations requires further validation. (2) SEMIR's node deletion mechanism excludes image regions outside learned thresholds, which improves efficiency but could discard atypical pathologies. Clinical implementations should maintain radiologist oversight and preserve access to full-resolution data. We believe improved minority-structure segmentation offers meaningful clinical benefit when deployed with appropriate validation and safeguards.

% The above statement can be used verbatim in such cases, but we encourage
% authors to think about whether there is content which does warrant further
% discussion, as this statement will be apparent if the paper is later flagged
% for ethics review.
\section*{Acknowledgments}
Luke Miller and Yugyung Lee acknowledge support from the National Science Foundation (NSF) under Award No.~2152057.

\bibliography{references}
\bibliographystyle{icml2026}

%%%%%%%%%%%%%%%%%%%%%%%%%%%%%%%%%%%%%%%%%%%%%%%%%%%%%%%%%%%%%%%%%%%%%%%%%%%%%%%
%%%%%%%%%%%%%%%%%%%%%%%%%%%%%%%%%%%%%%%%%%%%%%%%%%%%%%%%%%%%%%%%%%%%%%%%%%%%%%%
% APPENDIX
%%%%%%%%%%%%%%%%%%%%%%%%%%%%%%%%%%%%%%%%%%%%%%%%%%%%%%%%%%%%%%%%%%%%%%%%%%%%%%%
%%%%%%%%%%%%%%%%%%%%%%%%%%%%%%%%%%%%%%%%%%%%%%%%%%%%%%%%%%%%%%%%%%%%%%%%%%%%%%%

\appendix
\onecolumn
\section{Notation}\label{app:notation}
\begin{table*}[h]
\centering
% \caption{Notation used throughout the paper.}
\label{tab:notation}
\renewcommand{\arraystretch}{1.2}
\begin{tabular}{ll}
% \toprule
\textbf{Symbol} & \textbf{Description} \\
\midrule
$I \in \mathbb{R}^{H \times W \times D \times C}$ & Input medical image volume (height $\times$ width $\times$ depth $\times$ channels) \\
$I_{j,k,l} \in \mathbb{R}^C$ & Intensity vector at voxel coordinate $(j,k,l)$ \\
$Y \in \{0,\dots,K-1\}^{H \times W \times D}$ & Ground-truth voxel class labels \\
$Y_B \in \{0,1\}^{H \times W \times D}$ & Binary ground-truth boundary map \\
$\hat{Y} \in \{0,\dots,K-1\}^{H \times W \times D}$ & Predicted voxel class map \\
$T \in \{0,1\}^{(2H-1) \times (2W-1) \times (2D-1)}$ & Expanded binary tensor encoding graph nodes and edges \\
$G = (V(G), E(G))$ & Original N-connected grid graph of $I$ ($N \in \{6,10,18,26\}$) \\
$H = (V(H), E(H), X(H), F(H))$ & Graph minor of $G$ \\
$X(H) \in \mathbb{R}^{d_x \times |V(H)|}$ & Node feature matrix \\
$a_u$ & Area: voxel count of supernode $u$ \\
$b_u$ & Boundary length: number of exposed edges of supernode $u$ \\
$\mathrm{comp}_u = \frac{36\pi a_u^2}{b_u^3 + \varepsilon}$ & 3D compactness of supernode $u$ ($\varepsilon > 0$ for stability) \\
$d_u \in \mathbb{R}^3$ & Dominant axis: unit eigenvector of largest eigenvalue of spatial covariance \\
$\mathrm{elong}_u$ & Elongation ratio: derived from eigenvalues of spatial covariance \\
$p_u^\star$ & Canonical voxel: lexicographically smallest coordinate in supernode $u$ \\
$\sigma_u \in \mathbb{R}^C$ & Per-channel standard deviation of intensities in supernode $u$ \\
$\Sigma_u \in \mathbb{R}^{C \times C}$ & Covariance matrix of intensity vectors in supernode $u$ \\
$F(H) \in \mathbb{R}^{d_f \times |E(H)|}$ & Edge feature matrix (log-ratios of corresponding node features) \\
$\Theta = \{\psi, \alpha, \beta\}$ & Parameters for graph minor generation $S(T, \Theta)$ \\
$S(T, \Theta) \mapsto (T, H)$ & Graph minor generation function \\
$S_B(T, \Theta) \mapsto \hat{Y}_B$ & Boundary voxel map extractor \\
$R(\mathcal{D}_{\mathrm{few}}, \Theta)$ & Few-shot parameter optimization \\
$\mathcal{D}_{\mathrm{few}} \subset \mathcal{D}$ & Few-shot tuning subset of dataset \\
$\mathcal{D} = \{(I^{(n)}, Y^{(n)})\}_{n=1}^N$ & Full dataset \\
$\mathrm{Lift}(H, \hat{Y}_H, T) \mapsto \hat{Y}$ & Mapping of supernode predictions $\hat{Y}_H$ back to voxel grid \\
% \bottomrule
\end{tabular}
\end{table*}

\section{Algorithms}\label{app:algo}
\begin{figure*}[h]
\centering
\tikzset{
    box/.style={
        rectangle,
        draw,
        rounded corners=3pt,
        minimum height=1.1cm,
        minimum width=1.5cm,
        align=center,
        font=\small,
        inner sep=6pt
    },
    inp_box/.style={
        rectangle,
        % draw,
        rounded corners=0pt,
        minimum height=5.6cm,
        minimum width=1.8cm,
        align=center,
        anchor=south west,
        font=\small,
        inner sep=6pt
    },
    rep_box/.style={
        rectangle,
        % draw,
        rounded corners=0pt,
        minimum height=5.6cm,
        minimum width=8.15cm,
        align=center,
        anchor = south west,
        font=\small,
        inner sep=6pt
    },
    seg_box/.style={
        rectangle,
        % draw,
        rounded corners=0pt,
        minimum height=5.6cm,
        minimum width=6.3cm,
        align=center,
        anchor=south west,
        font=\small,
        inner sep=6pt
    },
    process/.style={box, fill=green!20},
    data/.style={box, fill=blue!20},
    loss/.style={box, fill=red!12},
    input/.style={inp_box, fill=yellow!10},
    rep/.style={rep_box, fill=red!10},
    seg/.style={seg_box, fill=yellow!10},
    arrow/.style={-Stealth, thick, shorten >=1pt, shorten <=1pt},
    curvedarrow/.style={-Stealth, thick, looseness=1.4}
}
\begin{tikzpicture}[node distance=1.4cm and 2.8cm]

\node[input] (inp)     at (-1, -1.4) {};
\node[align=center, anchor=south] at (inp.south) [xshift=0pt, yshift=0pt] {Input};
\node[rep]  (rep)      at (0.75, -1.4) {};
\node[align=center, anchor=south] at (rep.south) [xshift=0pt, yshift=0pt] {Representation Learning};
\node[seg]  (seg)      at (8.9,-1.4) {};
\node[align=center, anchor=south] at (seg.south) [xshift=0pt, yshift=0pt] {Segmentation};

\node[data] (Y)        at (-0.1,  0.5)   {Voxel\\Labels\\ $Y$};
\node[data] (I)        at (-0.10,  3)   {Image\\Volume\\ $I$};

\node[data] (YB)       at (2,  0.5)   {Boundary\\Voxels\\ $Y_B$};

\node[process] (SB)    at (4,  0.5)   {Rep\\ Learner\\$R(\cdot)$};
\node[data] (G)        at (4,  3)   {Grid\\Graph\\ $G$};

\node[data] (Theta)    at (6,  0.5)   {Opt.\\ Params\\$\Theta$};
\node[process] (S)     at (6,  3)   {Minor\\Gen\\ $S(\cdot)$};

\node[data] (T)        at (8,  0.5)   {Bijection\\Tensor\\ $T$};
\node[data] (H)        at (8,  3)   {Graph\\Minor\\ $H$};

\node[process] (Lift)  at (10.2, 0.5)   {Node$\mapsto$\\Voxel\\  $\mathrm{Lift}(\cdot)$};
\node[process] (GNN)   at (10.25,  3)   {Graph\\Neural Net\\ $GNN(\cdot)$};

\node[data] (Yhat)     at (12.3, 0.5) {Voxel\\Predictions\\ $\hat{Y}$};
\node[data] (YhatH)    at (12.6, 3) {Node\\Labels\\ $\hat{Y}_H$};

\node[loss] (L)        at (14.3,  0.5)   {Loss\\$\mathcal{L}(\cdot)$};

\draw[arrow] (Y)   -- (YB);
\draw[arrow, blue] (YB)  -- (SB);
\draw[arrow] (SB)  -- (Theta);
\draw[arrow, blue] (Theta) -- (S);
\draw[arrow, blue] (G) -- (SB);
\draw[arrow, blue] (T) -- (Lift);
\draw[arrow] (Yhat) -- (L);

\draw[arrow] (I)   -- (G);
\draw[arrow, blue] (G)   -- (S);
\draw[arrow] (S)   -- (T);
\draw[arrow] (S)   -- (H);
\draw[arrow, blue] (H)   -- (GNN);
\draw[arrow, dashed, red!80] (L)   |- ++(0,3.5) -|  (GNN);
\draw[arrow] (GNN) -- (YhatH);
\draw[arrow, blue] (YhatH) -- (Lift);

\draw[arrow, blue] (Y)     |- ++(,-1.2) -| (L);

\draw[arrow] (Lift) -- (Yhat);

\end{tikzpicture}
    \caption{SEMIR pipeline overview. From the input volume $  I  $, a boundary-aware graph minor $  H  $ is constructed using few-shot-optimized parameters $  \Theta  $. A graph neural network yields supernode predictions $  \hat{Y}_H  $, which are bijectively lifted via tensor $  T  $ to produce the final voxel segmentation $  \hat{Y}  $. Ground-truth voxel labels $  Y  $ provide supervision through the loss and few-shot boundary alignment.}
    \label{fig:workflow}
\end{figure*}

% \begin{figure*}[h]
%     \centering
%     \includegraphics[width=1\linewidth]{pipeline_Kits.png}
%     \caption{SEMIR pipeline visualization on a KiTS23 case. (a) Input contrast-enhanced CT. (b) Zoomed region showing the native voxel grid; each pixel corresponds to a node in the lattice graph $G$. (c) Boundary-aligned graph minor $H \preceq G$ constructed via parameterized edge contraction, node deletion, and edge deletion with few-shot optimized $\Theta_{\mathrm{opt}}$; supernodes colored by ID. (d) Node-level predictions $\hat{Y}_H$ from GNN inference on $H$. (e) Final voxel segmentation $\hat{Y}$ obtained via bijective lifting. (f) Ground truth $Y$. Kidney shown in red, tumor in blue.}
%     \label{fig:pipeline_vis}
% \end{figure*}
\begin{algorithm}[H]
\caption{\textsc{GetCoprime}: Find coprime step for pseudo-random traversal}
\label{alg:coprime}
\begin{algorithmic}
\REQUIRE Dimension size $n$, divisor $d$
\ENSURE Step size $s$ coprime to $n$
\STATE $s \leftarrow \lfloor n / \max(d, 1) \rfloor$
\STATE $s \leftarrow \mathrm{clamp}(s, 1, n-1)$
\FOR{$i = s$ \text{ to } $n-1$}
    \IF{$\gcd(n, i) = 1$}
        \STATE \textbf{Return } $i$
    \ENDIF
\ENDFOR
\FOR{$i = s-1$ \text{ to } $1$}
    \IF{$\gcd(n, i) = 1$}
        \STATE \textbf{Return } $i$
    \ENDIF
\ENDFOR
\STATE \textbf{Return } $1$
\end{algorithmic}
\end{algorithm}

\begin{algorithm}[H]
\caption{\textsc{MinorConstruction}: Graph minor construction $S(I, \Theta)$}
\label{alg:minor-construction}
\begin{algorithmic}
\REQUIRE Image volume $I \in \mathbb{R}^{H \times W \times D \times C}$, parameters $\Theta = \{\psi, \alpha, \beta\}$
\ENSURE Tensor $T$, graph minor $H = (V, E, X, F)$
\STATE \COMMENT{Initialize expanded tensor}
\STATE $T \leftarrow \mathbf{1}^{(2H-1) \times (2W-1) \times (2D-1)}$
\STATE \COMMENT{Compute coprime steps for pseudo-random traversal}
\STATE $(s_h, s_w, s_d) \leftarrow (\textsc{GetCoprime}(H, d),\; \textsc{GetCoprime}(W, d),\; \textsc{GetCoprime}(D, d))$
\STATE $(r_0, c_0, l_0) \leftarrow$ random start position
\STATE $V \leftarrow \emptyset$, $X \leftarrow \emptyset$
\STATE \COMMENT{Phase 1: Edge contraction via flood-fill with node deletion}
\FOR{$i = 0$ \text{ to } $H-1$}
    \FOR{$j = 0$ \text{ to } $W-1$}
        \FOR{$k = 0$ \text{ to } $D-1$}
            \STATE $(r, c, l) \leftarrow ((r_0 + i \cdot s_h) \bmod H,\; (c_0 + j \cdot s_w) \bmod W,\; (l_0 + k \cdot s_d) \bmod D)$
            \IF{$T[2r, 2c, 2l]$ is visited}
                \STATE \textbf{continue}
            \ENDIF
            \STATE $(T, x_u, \mathcal{P}_u) \leftarrow \textsc{FloodFillContract}(I, T, (r,c,l), \psi, \alpha)$
            \STATE \COMMENT{Node deletion check}
            \IF{$\beta_{\min} < |\mathcal{P}_u| < \beta_{\max}$}
                \STATE $V \leftarrow V \cup \{u\}$, \; $X \leftarrow X \cup \{x_u\}$
            \ELSE
                \STATE Mark all voxels in $\mathcal{P}_u$ as deleted in $T$
            \ENDIF
        \ENDFOR
    \ENDFOR
\ENDFOR
\STATE \COMMENT{Phase 2: Build edge list and edge features from surviving nodes}
\STATE $(E, F) \leftarrow \textsc{ExtractEdges}(T, V, X)$
\STATE \textbf{Return } $(T, H = (V, E, X, F))$
\end{algorithmic}
\end{algorithm}

\begin{algorithm}[H]
\caption{\textsc{FloodFillContract}: Region growing with contraction and edge deletion}
\label{alg:flood-fill}
\begin{algorithmic}
\REQUIRE Image $I$, tensor $T$, seed voxel $(r, c, l)$, merge threshold $\psi$, cut threshold $\alpha$
\ENSURE Updated $T$, node features $x_u$, voxel set $\mathcal{P}_u$
\STATE $v_{\mathrm{seed}} \leftarrow I[r, c, l]$ \COMMENT{Seed intensity (canonical voxel)}
\STATE Initialize stack $S \leftarrow \{(2r, 2c, 2l)\}$, voxel set $\mathcal{P}_u \leftarrow \emptyset$
\STATE Initialize running statistics: area, sums for coordinates, intensities, products
\WHILE{$S \neq \emptyset$}
    \STATE Pop $(y, x, z)$ from $S$
    \IF{$T[y, x, z]$ is visited}
        \STATE \textbf{continue}
    \ENDIF
    \STATE Mark $T[y, x, z]$ as visited
    \STATE $(r', c', l') \leftarrow (y/2, x/2, z/2)$ \COMMENT{Original voxel coordinates}
    \STATE $\mathcal{P}_u \leftarrow \mathcal{P}_u \cup \{(r', c', l')\}$
    \STATE Update running statistics with $(r', c', l')$ and $I[r', c', l']$
    \STATE Update canonical voxel if $(r', c', l')$ is lexicographically smaller
    \FOR{each N-connected neighbor direction $\delta$}
        \STATE $(y', x', z') \leftarrow (y, x, z) + 2\delta$ \COMMENT{Neighbor node position}
        \IF{$(y', x', z')$ out of bounds \OR $T[y', x', z']$ is merged}
            \STATE \textbf{continue}
        \ENDIF
        \STATE $\mathrm{diff} \leftarrow \|v_{\mathrm{seed}} - I[y'/2, x'/2, z'/2]\|_n$
        \IF{$\mathrm{diff} \leq \psi$}
            \STATE Mark $T[y', x', z']$ as merged
            \STATE Push $(y', x', z')$ onto $S$
        \ELSE
            \IF{$\mathrm{diff} \geq \alpha$}
                \STATE $(e_y, e_x, e_z) \leftarrow (y, x, z) + \delta$ \COMMENT{Edge position}
                \STATE Mark $T[e_y, e_x, e_z]$ as edge-deleted
            \ENDIF
            \STATE Mark $T[y, x, z]$ as boundary-adjacent
        \ENDIF
    \ENDFOR
    \IF{$T[y, x, z]$ is boundary-adjacent}
        \STATE Increment boundary length counter
    \ENDIF
\ENDWHILE
\STATE $x_u \leftarrow \textsc{ComputeNodeFeatures}(\text{running statistics}, \mathcal{P}_u)$
\STATE \textbf{Return } $(T, x_u, \mathcal{P}_u)$
\end{algorithmic}
\end{algorithm}

\begin{algorithm}[H]
\caption{\textsc{ExtractEdges}: Build edge list and features from tensor}
\label{alg:extract-edges}
\begin{algorithmic}
\REQUIRE Tensor $T$, node set $V$, node features $X$
\ENSURE Edge set $E$, edge features $F$
\STATE $E \leftarrow \emptyset$, $F \leftarrow \emptyset$
\FOR{each node $u \in V$}
    \STATE $(r, c, l) \leftarrow$ canonical voxel of $u$
    \FOR{each N-connected neighbor direction $\delta$}
        \STATE $(e_y, e_x, e_z) \leftarrow (2r, 2c, 2l) + \delta$ \COMMENT{Edge position}
        \IF{$T[e_y, e_x, e_z]$ is not edge-deleted}
            \STATE $(y', x', z') \leftarrow (2r, 2c, 2l) + 2\delta$ \COMMENT{Neighbor node position}
            \STATE $v \leftarrow$ node containing voxel $(y'/2, x'/2, z'/2)$
            \IF{$v \in V$ \AND $v \neq u$ \AND $(u, v) \notin E$}
                \STATE $E \leftarrow E \cup \{(u, v)\}$
                \STATE $f_{uv} \leftarrow \textsc{ComputeEdgeFeatures}(X_u, X_v)$
                \STATE $F \leftarrow F \cup \{f_{uv}\}$
            \ENDIF
        \ENDIF
    \ENDFOR
\ENDFOR
\STATE \textbf{Return } $(E, F)$
\end{algorithmic}
\end{algorithm}

\begin{algorithm}[H]
\caption{\textsc{FewShotOptimization}: Parameter optimization $R(\mathcal{D}_{\mathrm{few}}, \Theta)$}
\label{alg:few-shot}
\begin{algorithmic}
\REQUIRE Few-shot dataset $\mathcal{D}_{\mathrm{few}}$, parameter bounds $\Theta$, iterations $N_{\mathrm{iter}}$, initial samples $N_{\mathrm{init}}$
\ENSURE Optimized parameters $\Theta_{\mathrm{opt}}$
\STATE \COMMENT{Initialize surrogate with random samples}
\STATE $\mathcal{H} \leftarrow \emptyset$ \COMMENT{History of $(\theta, \text{loss})$ pairs}
\FOR{$i = 1$ \text{ to } $N_{\mathrm{init}}$}
    \STATE $\theta \sim \mathrm{Uniform}(\Theta)$
    \STATE $\mathcal{L} \leftarrow \textsc{EvaluateBoundaryLoss}(\theta, \mathcal{D}_{\mathrm{few}})$
    \STATE $\mathcal{H} \leftarrow \mathcal{H} \cup \{(\theta, \mathcal{L})\}$
\ENDFOR
\STATE \COMMENT{Sequential model-based optimization}
\FOR{$i = 1$ \text{ to } $N_{\mathrm{iter}}$}
    \STATE Fit ExtraTrees surrogate $\hat{f}$ on $\mathcal{H}$
    \STATE $\theta_{\mathrm{next}} \leftarrow \arg\max_{\theta} \mathrm{EI}(\theta; \hat{f}, \mathcal{H})$ \COMMENT{Expected Improvement}
    \STATE $\mathcal{L} \leftarrow \textsc{EvaluateBoundaryLoss}(\theta_{\mathrm{next}}, \mathcal{D}_{\mathrm{few}})$
    \STATE $\mathcal{H} \leftarrow \mathcal{H} \cup \{(\theta_{\mathrm{next}}, \mathcal{L})\}$
\ENDFOR
\STATE $\Theta_{\mathrm{opt}} \leftarrow \arg\min_{(\theta, \mathcal{L}) \in \mathcal{H}} \mathcal{L}$
\STATE \textbf{Return } $\Theta_{\mathrm{opt}}$
\end{algorithmic}
\end{algorithm}

\begin{algorithm}[H]
\caption{\textsc{EvaluateBoundaryLoss}: Compute boundary alignment loss}
\label{alg:boundary-loss}
\begin{algorithmic}
\REQUIRE Parameters $\theta$, few-shot dataset $\mathcal{D}_{\mathrm{few}}$
\ENSURE Mean boundary loss $\bar{\mathcal{L}}$
\STATE $\mathcal{L}_{\mathrm{total}} \leftarrow 0$
\FOR{each $(I, Y) \in \mathcal{D}_{\mathrm{few}}$}
    \STATE $Y_B \leftarrow \textsc{ExtractGroundTruthBoundary}(Y)$ \COMMENT{Voxels adjacent to different class}
    \STATE $(T, \_) \leftarrow \textsc{MinorConstruction}(I, \theta)$ \COMMENT{Features not needed}
    \STATE $\hat{Y}_B \leftarrow \textsc{ExtractMinorBoundary}(T)$ \COMMENT{Boundary-adjacent voxels in $T$}
    \STATE $\mathcal{L}_{\mathrm{total}} \leftarrow \mathcal{L}_{\mathrm{total}} + (1 - \mathrm{DSC}(\hat{Y}_B, Y_B))$
\ENDFOR
\STATE \textbf{Return } $\bar{\mathcal{L}} \leftarrow \mathcal{L}_{\mathrm{total}} / |\mathcal{D}_{\mathrm{few}}|$
\end{algorithmic}
\end{algorithm}

\begin{algorithm}[H]
\caption{\textsc{Lift}: Map supernode predictions to voxel grid}
\label{alg:lift}
\begin{algorithmic}
\REQUIRE Tensor $T$, node predictions $\hat{Y}_H$, node set $V$ with canonical voxels
\ENSURE Voxel predictions $\hat{Y} \in \{0, \dots, K-1\}^{H \times W \times D}$
\STATE $\hat{Y} \leftarrow \mathbf{0}^{H \times W \times D}$ \COMMENT{Background default for deleted nodes}
\FOR{each node $u \in V$}
    \STATE $(r, c, l) \leftarrow$ canonical voxel of $u$
    \STATE Initialize stack $S \leftarrow \{(2r, 2c, 2l)\}$, visited set $\mathcal{V} \leftarrow \emptyset$
    \WHILE{$S \neq \emptyset$}
        \STATE Pop $(y, x, z)$ from $S$
        \IF{$(y, x, z) \in \mathcal{V}$}
            \STATE \textbf{continue}
        \ENDIF
        \STATE $\mathcal{V} \leftarrow \mathcal{V} \cup \{(y, x, z)\}$
        \STATE $\hat{Y}[y/2, x/2, z/2] \leftarrow \hat{Y}_H[u]$
        \FOR{each N-connected neighbor direction $\delta$}
            \STATE $(e_y, e_x, e_z) \leftarrow (y, x, z) + \delta$ \COMMENT{Edge position}
            \STATE $(y', x', z') \leftarrow (y, x, z) + 2\delta$ \COMMENT{Neighbor node position}
            \IF{in bounds \AND $T[e_y, e_x, e_z]$ not edge-deleted \AND $T[y', x', z']$ is merged}
                \STATE Push $(y', x', z')$ onto $S$
            \ENDIF
        \ENDFOR
    \ENDWHILE
\ENDFOR
\STATE \textbf{Return } $\hat{Y}$
\end{algorithmic}
\end{algorithm}

\section{Full Description of Features}\label{app:features}

While node features capture absolute properties of individual supernodes, edge features encode relative differences between adjacent supernodes, promoting scale- and rotation-invariant representations suitable for the graph neural network.

For each edge $e = (u,v) \in E(H)$, we first order the incident supernodes by area:
\begin{equation}
u^{+} := \arg\max\{a_u, a_v\},\qquad u^{-} := \arg\min\{a_u, a_v\}.
\end{equation}

For any scalar node feature $g(\cdot)$ (e.g., $a_u$, $b_u$, $\mathrm{comp}_u$, $\mathrm{elong}_u$), we compute a scale-invariant log-ratio:
\begin{equation}
r_g(e) := \log \frac{g(u^{+}) + \varepsilon}{g(u^{-}) + \varepsilon},
\end{equation}
with $\varepsilon > 0$ for numerical stability.

We calculate relative geometric and orientation edge features for each scalar node feature. 
\begin{align}
\Delta_\mu(e) &:= \frac{\mu_u - \mu_v}{\sqrt{\lambda_{u,1} + \lambda_{v,1} + \varepsilon}} \in \mathbb{R}^3, \\
\cos\theta(e) &:= |d_u^\top d_v| \in [0,1],
\end{align}
where $\mu_u$ and $\mu_v$ are the (transiently computed) centroids of supernodes $u$ and $v$, $\lambda_{u,1}$ and $\lambda_{v,1}$ are their respective largest spatial eigenvalues, and $d_u, d_v$ are the dominant axes.

For intensity-based features, we use normalized per-channel differences:
\begin{equation}
\Delta_I(e) := \frac{\bar{I}_u - \bar{I}_v}{\sqrt{s_{I,u}^2 + s_{I,v}^2 + \varepsilon}} \in \mathbb{R}^C,
\end{equation}
where $\bar{I}_u, \bar{I}_v$ are the mean intensity vectors and $s_{I,u}^2, s_{I,v}^2$ are the per-channel variance vectors (i.e., element-wise squares of $\sigma_u, \sigma_v$).

These relative features ($r_g(e)$ for relevant scalars, $\Delta_\mu(e)$, $\cos\theta(e)$, $\Delta_I(e)$) are stored in the edge feature matrix $F(H)$.

\section{Extended Ablation Studies}\label{app:ablation}

\subsection{Graph-Minor Construction Operations}

Table~\ref{tab:app-ablation-pipeline} shows the impact of disabling individual operations. Edge contraction prevents severe fragmentation; edge deletion separates minority structures; node deletion prunes noise and oversized background supernodes. Uniform grid downsampling (intensity-agnostic) serves as a weak baseline.

\begin{table}[h]
\centering
\caption{Ablation of graph-minor construction operations.}
\label{tab:app-ablation-pipeline}
\small
\begin{tabular}{l | c c | c}
\toprule
 & \multicolumn{2}{c|}{\textbf{BraTS}} & \textbf{NWPU} \\
\textbf{Variant} & \textbf{ET} & \textbf{TC} & \textbf{VHR-10} \\
\midrule
Full SEMIR (learned $\Theta$) & \textbf{0.894} & \textbf{0.941} & \textbf{0.862} \\
No edge contraction & 0.441 & 0.492 & 0.408 \\
No edge deletion & 0.719 & 0.774 & 0.681 \\
No node deletion & 0.812 & 0.837 & 0.749 \\
Grid downsampling & 0.393 & 0.351 & 0.252 \\
\bottomrule
\end{tabular}
\end{table}

\subsection{Few-Shot Parameter Optimization}

Table~\ref{tab:app-ablation-fewshot} compares learned $\Theta$ against fixed thresholds across few-shot set sizes. Performance saturates around 5 samples for BraTS and 20 samples for multi-channel NWPU.

\begin{table}[h]
\centering
\caption{Few-shot optimization strategy and set size $|\mathcal{D}_{\mathrm{few}}|$.}
\label{tab:app-ablation-fewshot}
\small
\begin{tabular}{l c c c}
\toprule
\textbf{$\Theta$ strategy} & \textbf{$|\mathcal{D}_{\mathrm{few}}|$} & \textbf{BraTS ET} & \textbf{NWPU} \\
\midrule
Learned & 1 & 0.766 & 0.641 \\
Learned & 5 & \textbf{0.894} & 0.789 \\
Learned & 10 & 0.891 & 0.854 \\
Learned & 20 & 0.890 & \textbf{0.862} \\
Fixed (loose) & -- & 0.499 & 0.544 \\
Fixed (tight) & -- & 0.837 & 0.749 \\
Fixed (visual) & -- & 0.711 & 0.763 \\
\bottomrule
\end{tabular}
\end{table}

\subsection{Distance Norms and Connectivity}

Table~\ref{tab:app-ablation-norms} evaluates distance norms and N-connectivity. Norms are equivalent on single-channel BraTS. On multi-channel NWPU, $L_\infty$ shows strongest robustness to channel-specific outliers; higher connectivity improves small-object IoU at moderate runtime cost.

\begin{table}[h]
\centering
\caption{Distance norm and N-connectivity ablation.}
\label{tab:app-ablation-norms}
\small
\begin{tabular}{lc | c c | c c}
\toprule
 & & \multicolumn{2}{c|}{\textbf{BraTS ET}} & \multicolumn{2}{c}{\textbf{NWPU}} \\
\textbf{Norm} & \textbf{N} & \textbf{DSC} & \textbf{Time} & \textbf{IoU} & \textbf{Time} \\
\midrule
$L_1$ & 6 & 0.894 & 1.0$\times$ & 0.812 & 1.0$\times$ \\
$L_1$ & 10 & 0.889 & 1.6$\times$ & 0.819 & 1.8$\times$ \\
$L_1$ & 18 & 0.890 & 2.7$\times$ & 0.825 & 2.3$\times$ \\
$L_2$ & 6 & --- & --- & 0.796 & 1.4$\times$ \\
$L_2$ & 10 & --- & --- & 0.804 & 2.5$\times$ \\
$L_2$ & 18 & --- & --- & 0.813 & 3.8$\times$ \\
$L_\infty$ & 6 & --- & --- & 0.809 & 1.1$\times$ \\
$L_\infty$ & 10 & --- & --- & 0.836 & 1.9$\times$ \\
$L_\infty$ & 18 & --- & --- & \textbf{0.849} & 2.5$\times$ \\
\bottomrule
\end{tabular}
\end{table}

\subsection{Feature Design}

Removing relative edge features causes ET Dice to drop by 11--19\% on BraTS and small-object IoU by 7--14\% on NWPU, underscoring their importance for scale- and rotation-invariant relational modeling. Disabling spatial features (compactness, elongation, dominant axis) reduces performance by 24--27\% in both domains, with greater impact on irregularly shaped objects.

\section{Complexity Analysis}\label{app:complexity}

\paragraph{Minor construction.}
Each voxel is visited exactly once during flood-fill traversal, 
and each edge is examined at most twice. Construction is therefore 
$O(N \cdot HWD)$ where $N \in \{6,10,18,26\}$ is the grid connectivity—linear 
in voxel count regardless of the induced minor size.

\paragraph{GNN inference.}
Message passing operates on the minor $H$, with cost $O(L(|V(H)| + |E(H)|))$ 
for $L$ layers. The critical quantity is $|V(H)|$, which depends on image 
content rather than resolution: $|V(H)|$ reflects the number of intensity-homogeneous 
regions satisfying the learned thresholds $\Theta_{\mathrm{opt}}$. 

In the degenerate case (all adjacent voxels differ by more than $\psi$), 
$|V(H)| = O(HWD)$. In practice, medical images contain large homogeneous 
regions (background, parenchyma) and $|V(H)| \ll HWD$. 
Table~\ref{tab:complexity} reports empirical supernode counts.

\begin{table}[h]
\centering
\caption{Empirical supernode counts $|V(H)|$ across benchmarks.}
\label{tab:complexity}
\small
\begin{tabular}{l c c c}
\toprule
& \textbf{BraTS} & \textbf{KiTS} & \textbf{LiTS} \\
\midrule
Volume size & $240 \times 240 \times 155$ & variable & variable \\
Voxels $|V(G)|$ & ${\sim}8.9 \times 10^6$ & ${\sim}10^7$--$10^8$ & ${\sim}10^7$--$10^8$ \\
Supernodes $|V(H)|$ & $2034 \pm 187$ & $1429 \pm 456$ & $1075 \pm 297$ \\
Reduction factor & ${\sim}4400\times$ & ${\sim}10^4\times$ & ${\sim}10^4\times$ \\
\bottomrule
\end{tabular}
\end{table}

\begin{table}[h]
\centering
\caption{Inference complexity across segmentation paradigms for a typical medical volume ($256 \times 256 \times 128$ voxels, ${\sim}8.4 \times 10^6$ total). SEMIR operates on 3--4 orders of magnitude fewer nodes than dense methods while maintaining exact voxel-level output via lifting.}
\label{tab:inference-complexity}
\small
\begin{tabular}{l c c c}
\toprule
\textbf{Method} & \textbf{Paradigm} & \textbf{Inference Units} & \textbf{Unit Type} \\
\midrule
3D U-Net & Dense & $8.4 \times 10^6$ & voxels \\
nnU-Net & Dense & $8.4 \times 10^6$ & voxels \\
Swin UNETR & Dense + Attention & $8.4 \times 10^6$ & voxels \\
TransUNet & Patch + Dense & $8.4 \times 10^6$ & voxels \\
\midrule
Patch-based (64³) & Sliding window & $8.4 \times 10^6$ & voxels$^\dagger$ \\
Patch-based (128³) & Sliding window & $8.4 \times 10^6$ & voxels$^\dagger$ \\
\midrule
SLIC superpixels & Fixed regions & $10^4$--$10^5$ & superpixels \\
Felzenszwalb & Fixed regions & $10^4$--$10^5$ & superpixels \\
\midrule
\textbf{SEMIR (ours)} & Learned minor & $10^3$--$10^4$ & supernodes \\
\bottomrule
\end{tabular}
\vspace{0.5em}

{\raggedright \footnotesize $^\dagger$Patch-based methods reduce memory per forward pass but process equivalent total voxels with overlapping windows. \par}
\end{table}

Table~\ref{tab:inference-complexity} compares the number of inference units across segmentation paradigms. Dense methods (U-Net variants, transformers) perform per-voxel prediction, scaling directly with image resolution regardless of structural complexity. Patch-based inference reduces peak memory but does not reduce total computation, as overlapping windows collectively cover all voxels. Classical superpixel methods (SLIC, Felzenszwalb) reduce inference units but use task-agnostic regionization with manually tuned parameters.

SEMIR achieves 3--4 orders of magnitude reduction in inference nodes compared to dense methods by learning a boundary-aligned graph minor adapted to the target structure. Empirical supernode counts from Table~\ref{tab:complexity} confirm this reduction: BraTS volumes (${\sim}8.9 \times 10^6$ voxels) yield $2034 \pm 187$ supernodes; KiTS and LiTS volumes (${\sim}10^7$--$10^8$ voxels) yield $1075$--$1429$ supernodes. The downstream GNN operates entirely on this reduced representation, with voxel-level predictions recovered via exact lifting (Theorem~\ref{thm:lifting}).

\section{Implementation Details}\label{app:implementation}

\paragraph{Hardware and software.}
All experiments were conducted in Google Colab on an NVIDIA Tesla T4 GPU (16GB). Preprocessing and learning are implemented in Python using NumPy and PyTorch.

\paragraph{GNN architecture.}
We use a 3-layer GIN variant with edge features (GINE) \cite{xu2018how, Hu2020Strategies}. Each layer applies message passing with edge-conditioned messages; node and edge features are embedded by MLPs with ReLU activations and hidden dimension $d=128$. A final per-node linear layer outputs logits for each supernode.

\paragraph{Training details.}
We train with Adam \cite{kingma2014adam} using learning rate $10^{-3}$ and no weight decay for up to 200 epochs. Early stopping monitors validation Dice on the binarized foreground target with patience 10. Batch size is 1--4 volumes depending on memory constraints from variable graph sizes.

\paragraph{Few-shot dataset sizes.}
We use $|\mathcal{D}_{\mathrm{few}}| = 5$ for BraTS, $10$ for KiTS, and $20$ for LiTS, drawn from the training split.

\paragraph{Graph construction settings.}
We evaluate N-connectivity $N \in \{6, 10, 18\}$ in the expanded tensor $T$, using $N=6$ by default. For intensity-distance computations, we evaluate $L_1$ (Manhattan), $L_2$ (Euclidean), and $L_\infty$ (Chebyshev) norms. Numerical stabilizers use $\varepsilon = 10^{-6}$. Node-deletion bounds are initialized with $\beta_{\min} = 1$ and $\beta_{\max} = \lfloor (HWD)/3 \rfloor$, where $HWD$ is the voxel count of the input volume.

\paragraph{Reproducibility.}
Random seeds are fixed to 42 for Python, NumPy, and PyTorch. CUDA seeds are set via \texttt{torch.cuda.manual\_seed\_all(42)}.

\end{document}